\newcommand{\jmlr}{{\cite{gourdeau2021hardness}}}
\newcommand{\R}{\mathbb{R}}
\newcommand{\N}{\mathbb{N}}
\newcommand{\abs}[1]{ \left\vert #1\right\vert }
\newcommand{\set}[1]{\left\{#1\right\}}
\newcommand{\eval}[2]{\underset{{#1}}{\mathbb{E}}\left[#2\right]}
\newcommand{\Prob}[2]{\underset{{#1}}{\Pr}\left(#2\right)}
\newcommand{\st}{\;.\;}
\newcommand{\roblossc}{\mathsf{R}^C_\rho}
\newcommand{\roblosse}{\mathsf{R}^E_\rho}
\newcommand{\robloss}{\mathsf{R}_\rho}
\newcommand{\roblog}{\mathsf{R}_{\log(n)}}
\newcommand{\satlog}{\mathsf{S}_{\log(n)}}
\newcommand{\satrho}{\mathsf{S}_{\rho}}
\newcommand{\satnot}{\mathsf{S}_{0}}
\newcommand{\boolhc}{\set{0,1}^n}
\newcommand{\poly}{\text{poly}}
\newcommand{\given}{\;|\;}
\newcommand{\A}{\mathcal{A}}
\newcommand{\C}{\mathcal{C}}
\newcommand{\D}{\mathcal{D}}
\newcommand{\E}{\mathcal{E}}
\renewcommand{\H}{\mathcal{H}}
\newcommand{\X}{\mathcal{X}}
\newcommand{\matching}{\mathcal{M}}
\newcommand{\asst}{\mathcal{A}_{\matching}}
\newcommand{\varM}{I_{\matching}}
\newcommand{\MonConj}{\textsf{MON-CONJ}}
\newcommand{\dl}{$\mathsf{DL}$}
\newtheorem{theorem}{Theorem}
\newtheorem{lemma}[theorem]{Lemma}
\newtheorem{corollary}[theorem]{Corollary}
\newtheorem{definition}[theorem]{Definition}
\newtheorem{remark}[theorem]{Remark}
\title{Sample Complexity Bounds for Robustly Learning Decision Lists against Evasion Attacks}
\author{
Pascale Gourdeau,
Varun Kanade, 
Marta Kwiatkowska, and
James Worrell\\
University of Oxford
}
\begin{document}
\maketitle
\begin{abstract}
A fundamental problem in adversarial machine learning is to quantify
how much training data is needed in the presence of evasion attacks. In this paper
we address this issue within the framework of PAC learning, focusing on  
the class of decision lists.  Given that distributional assumptions are essential in the adversarial setting, we work with probability distributions on the input data that satisfy a
Lipschitz condition: nearby points have similar probability.  Our key results
illustrate that the adversary's budget (that is, the number of bits it can perturb
on each input) is a fundamental quantity in determining the sample complexity of
robust learning.
Our first main result is a sample-complexity lower bound: the class of monotone conjunctions (essentially the simplest non-trivial hypothesis class on the Boolean hypercube) and any superclass  has sample complexity at least exponential in the adversary's budget. Our second main result is a corresponding upper bound: for every fixed $k$ the class of $k$-decision lists has polynomial sample complexity against a $\log(n)$-bounded adversary. This sheds
further light on the question
of whether
an efficient PAC learning algorithm can always be used as an efficient
$\log(n)$-robust learning algorithm under the uniform distribution.

\end{abstract}

\section{Introduction}

Adversarial machine learning has been extensively studied in recent years, first with spam filtering in
\cite{dalvi2004adversarial,lowd2005adversarial,lowd2005good}, and then
when the notion of \emph{adversarial examples} was introduced by
\citet{szegedy2013intriguing}, and independently noticed by
\citet{biggio2013evasion}.  Various settings to study adversarial
machine learning guarantees (and impossibility results) have emerged
in the literature since.  The most common distinction, presented by
\citet{biggio2017wild}, differentiates between attacks at training
time, known as \emph{poisoning attacks}, and attacks at test time,
called \emph{evasion attacks}.

In the context of evasion attacks, a misclassification by a model has
been defined in various ways, and sometimes regrettably referred to by
the same terminology.
\citet{dreossi2019formalization,diochnos2018adversarial,gourdeau2021hardness}
offer thorough discussions on the subject.  We will focus on the
\emph{exact-in-the-ball} notion of robustness (also known as
\emph{error region} risk in \cite{diochnos2018adversarial}), which
necessitates a ground truth function.  Briefly, the exact-in-the-ball
notion of robustness requires a hypothesis to be correct with respect
to the ground truth in a perturbation region around each test point.
Note that, in this case, the ground truth must be specified on all
input points in the perturbation region.  By contrast, the
constant-in-the-ball notion of robustness (which is also known as
\emph{corrupted input} robustness) is concerned with the stability of
the hypothesis to perturbations in the input, and requires that the
label produced by the hypothesis remain constant in the perturbation
region, meaning that we only need access to the test point labels.

The hardness of robust classification has been explored from both a computational complexity and a statistical viewpoint, see for e.g., \cite{bubeck2019adversarial,montasser2019vc}.
In this paper, we focus on the Boolean hypercube $\boolhc$ as our input space and study the information-theoretic complexity of robust learning by exhibiting sample complexity upper and lower bounds that depend on an \emph{adversarial budget}, i.e., the number of bits an adversary is allowed to flip at test time, thus illustrating that the adversarial
budget is a fundamental quantity in determining the sample
complexity of robustly learning important concept classes.

\subsection{Our Contributions}

Our work builds on the work of \citet{gourdeau2019hardness} and its extended version \cite{gourdeau2021hardness}.
Our results hold for the \emph{exact-in-the-ball}  robustness to evasion attacks, and are outlined below.

\textbf{Robust Learning of Decision Lists:}
As shown in \cite{gourdeau2021hardness}, \emph{efficient, exact-in-the-ball} robust learning is not possible without distributional assumptions on the training data.\footnote{This is in contrast to PAC learning, which is distribution-free.} 
We follow their line of work and establish the sample-efficient robust learnability of decision lists against a $\log(n)$-bounded adversary under log-Lipschitz distributions, which include the uniform and product distributions. The algorithms we use to show such upper bounds are called $\rho$-robust learning algorithms, where $\rho$ is the allowed perturbation budget for an adversary.
In proving our first result we obtain
an isoperimetric bound that may be of independent interest: for a CNF
formula $\varphi$ we give an upper bound on the number of points in
the Boolean hypercube within a given Hamming distance to a
satisfying assignment of $\varphi$.
An analogue result was shown only for \emph{monotone} decision lists in \jmlr.
More importantly, \citet{gourdeau2021hardness} suggested the following open problem:
\begin{center}
\emph{Let $\A$ be a sample-efficient (potentially proper) PAC-learning algorithm for concept class $\C$. Is $\A$ also a sample-efficient $\log(n)$-robust learning algorithm for $\C$ under the uniform distribution?}
\end{center}
So far, all the concept classes that have been studied point towards a positive answer to this question. 
As log-Lipschitz distributions subsume the uniform distribution, our result thus adds to the body of positive evidence for this problem.

\textbf{An Adversarial Sample Complexity Lower Bound:} 
To complement the above result, we show that any $\rho$-robust learning algorithm for monotone conjunctions must have a sample complexity that is exponential in the number $\rho$ of bits an adversary is allowed to flip during an evasion attack.
Previously, \citet{gourdeau2021hardness}  showed that there does not exist such an algorithm with polynomial sample complexity against an adversary that can perturb $\omega(\log(n))$ bits of the input.

\subsection{Related Work}

The inevitability of adversarial examples under the constant-in-the-ball 
definition of robustness has been extensively studied, see for e.g., \cite{fawzi2016robustness,fawzi2018adversarial,fawzi2018analysis,gilmer2018adversarial,shafahi2018adversarial,tsipras2019robustness,ilyas2019adversarial}.
We first outline related work on sample complexity lower bounds
for robust learning. \citet{bhagoji2019lower} work with the 
constant-in-the-ball definition of robustness and use an optimal transport 
cost function to derive lower bounds for learning classes with 
labels that come from a mixture of Gaussian distributions. \citet{montasser2019vc} also
use this notion of robustness to show a lower bound that depends on a complexity
measure adapted to robustness from the shattering dimension of a concept class. 
Closer to our  work, \citet{diochnos2019lower,diochnos2020lower} exhibit 
lower bounds for the exact-in-the-ball robust risk. They focus on a family of 
concentrated  distributions, Normal Lévy families, which include, for e.g., the 
Gaussian distribution on $\R^n$ and product distribution of dimension $n$ 
under the Hamming distance.\footnote{We work with the uniform distribution, 
which is a special case of product distributions.} Instead of
looking at a specific class of functions, they allow any concept class that 
contain concepts that have small enough ($2^{-\Theta(n)}$) standard error 
with respect to each other, and so would be indistinguishable for sufficiently 
small samples. Note that monotone conjunctions satisfy this property. When
considering the Boolean hypercube and an adversary that can perturb $\rho$ 
bits of the input, they get that any robust PAC learning algorithm for their 
robust learning setting requires a sample of size $2^{\Omega(\rho^2/n)}$. 
Note that this lower bound is non trivial only when considering adversaries
that can perturb $\sqrt{n}$ bits or more, while we show a lower bound that 
is strictly \emph{exponential} in the adversary's budget (though for  slightly
more restricted concept classes), and thus meaningful for a wider class of 
adversaries.

In terms of sample complexity upper  bounds,  \citet{montasser2019vc} show
sample complexity upper bounds that are linear (ignoring log factors) in the 
VC dimension and the dual VC 
dimension of a concept class under the constant-in-the-ball notion of 
robustness, yielding an exponential upper bound in the VC dimension. 
As noted in 
{\jmlr}, their techniques do not apply to the exact-in-the-ball setting, which is
studied for evasion attacks notably in \cite{diochnos2018adversarial,mahloujifar2019can,mahloujifar2019curse,gourdeau2019hardness,gourdeau2021hardness}.
The work of \citet{diochnos2018adversarial} addresses the ability of an adversary 
to cause a blow up the adversarial error with respect to the standard error.
For instance, they show that, under the uniform distribution, a 
$O(\sqrt{n})$-bounded adversary can cause
the probability of a misclassification to be $1/2$ given that the standard 
error is $0.01$ for any learning problem. These results are extended by
 \citet{mahloujifar2019curse} for a wider family of distributions. Finally, 
 \citet{gourdeau2021hardness} exhibit sample complexity upper bounds
 for the robust learnability of a variety of concept classes (parities, 
 \emph{monotone} decision lists, and 
 decision trees) under log-Lipschitz distributions for various adversarial budgets.

\section{Problem Set Up}

In this section, we will first recall two definitions of robustness that have been widely used in the literature, and formalize the notion of robustness thresholds in the robust PAC-learning framework. 
We will then review relevant concept classes for this paper, as well as log-Lipschitz distributions, which were  introduced in \cite{awasthi2013learning} and will be the focus of our results.

\subsection{Robust Learning }

We work in the PAC learning framework of \citet{valiant1984theory} (see Appendix~\ref{app:pac}), but where the (standard) risk function is replaced by a \emph{robust} risk function. 
Since we focus on the Boolean hypercube $\boolhc$ as the input space, the only relevant notion of distance between points is the Hamming distance (denoted $d_H$), i.e., the number of bits that differ between two points.
Thus, the adversary's perturbation budget will be the number of bits of the input the adversary is allowed to flip to cause a misclassification.
We will use the \emph{exact-in-the-ball} definition of robust risk (which is called \emph{error-region} risk in \cite{diochnos2018adversarial}). 
Given respective hypothesis and target functions
$h,c:\mathcal{X}\rightarrow\{0,1\}$, distribution $D$ on
$\mathcal{X}$, and robustness parameter $\rho\geq 0$, the
exact-in-the-ball robust risk of $h$ with respect to $c$ is defined as
$\roblosse(h,c)=\Prob{x\sim D}{\exists z\in B_\rho(x):h(z)\neq c(z)}$, where $B_\rho(x)=\{z\in\boolhc\given d_H(x,z)\leq\rho\}$.
This is in contrast to the more widely-used  \emph{constant-in-the-ball} risk function (also called \emph{corrupted-instance} risk from the work of \citet{feige2015learning}) $\roblossc(h,c)=\Prob{x\sim D}{\exists z\in B_\rho(x):h(z)\neq c(x)}$ where the hypothesis is required to be constant in the perturbation region in addition to being correct with respect to the unperturbed point's label $ c(x)$.

Both \citet{diochnos2018adversarial} and \citet{gourdeau2021hardness} offer a thorough discussion on the advantages and drawbacks of the two notions of robust risk. 
We will study the \emph{exact-in-the-ball} robust risk, as our learning problems have considerable probability mass near the decision boundary. 
Thus  it makes sense to consider the faithfulness of the hypothesis with respect to the target function.
The exact-in-the-ball robust risk also has various advantages: if the distribution is supported on the whole input space (e.g., the uniform distribution), exact learnability implies robust learnability and the target concept is always the robust risk minimizer.\footnote{This is not necessarily the case with the constant-in-the-ball definition \cite{gourdeau2021hardness}.}
We have from \citet{gourdeau2021hardness} the following definition of robust learnability with respect to the exact-in-the-ball robust risk. 
Note that we will henceforth drop the superscript and simply use $\robloss$ to denote the exact-in-the-ball robust risk.

\begin{definition}
\label{def:robust-learning}
Fix a function $\rho:\N\rightarrow\N$. We say that an algorithm $\A$
\emph{efficiently} $\rho$-\emph{robustly learns} a concept class $\C$
with respect to distribution class $\mathcal{D}$ if there exists a
polynomial $\poly(\cdot,\cdot,\cdot,\cdot)$ such that for all
$n\in\mathbb{N}$, all target concepts $c\in \C_n$, all distributions
$D \in \mathcal{D}_n$, and all accuracy and confidence parameters
$\epsilon,\delta>0$, if $m \geq
\poly(n,1/\epsilon,1/\delta,\text{size}(c))$, whenever $\A$ is given access to
a sample $S\sim D^m$ labelled according to $c$, it outputs a polynomially evaluable function
$h:\{0,1\}^n\rightarrow\{0,1\}$ such
that $\Prob{S\sim D^m}{\robloss(h,c)<\epsilon}>1-\delta$.
\end{definition}

\subsection{Concept Classes and Distribution Families}

Our work uses formulas in the conjunctive normal form (CNF) to show the robust learnability of decision lists. 
This concept class was proposed and shown to be PAC learnable by \citet{rivest1987learning}.
Formally, given the maximum size $k$ of a conjunctive clause, a decision list $f\in k$-{\dl} is a list $(K_1,v_1),\dots,(K_r,v_r)$ of pairs
where $K_j$ is a term in the set of all conjunctions of size at most $k$ with literals drawn from $\set{x_1,\bar{x_1},\dots,x_n,\bar{x_n}}$, $v_j$ is a value in $\set{0,1}$, and $K_r$ is $\mathtt{true}$.
The output $f(x)$ of $f$ on $x\in\boolhc$ is $v_j$, where $j$ is the least index such that the conjunction $K_j$ evaluates to $\mathtt{true}$.

Given $k,n\in\N$, we denote by $\varphi$ a $k$-CNF on $n$ variables, where $k$ refers to the size of the largest clause in $\varphi$.
Note that the class $\MonConj$ of monotone conjunctions, where each variable appears as a positive literal, is a subclass of 1-CNF formulas.
Moreover, since decision lists generalize formulas in disjunctive normal form (DNF) and conjunctive normal form, in the sense that $k$-CNF $\cup$ $ k$-DNF $\subseteq k$-DL,
 a robust learnability result for $k$-{\dl} holds for $k$-CNF and $k$-DNF as well.
We refer the reader to Appendix~\ref{app:formulas} for more background on conjunctions and $k$-CNF formulas.

For a formula $\varphi$, we will denote by $\satnot(\varphi)$ the probability $\Prob{x\sim D}{x\models \varphi}$ that $x$ drawn from distribution $D$ results in a satisfying assignment of $\varphi$.
We will also denote the probability mass  $\Prob{x\sim D}{\exists z\in B_{\rho}(x) \st z \models \varphi}$ of the $\rho$-expansion of a satisfying assignment  by $\satrho(\varphi)$.

Our robust learnability results will hold for a class of sufficiently smooth distributions, called log-Lipschitz distributions, originally introduced by \citet{awasthi2013learning}: 

\begin{definition}
A distribution  $D$ on $\boolhc$ is said to be $\alpha$-$\log$-Lipschitz if 
for all input points $x,x'\in \boolhc$, if $d_H(x,x')=1$, then $|\log(D(x))-\log(D(x'))|\leq\log(\alpha)$.
\end{definition}

Neighbouring points in $\boolhc$ have probability masses that differ
by at most a multiplicative factor of $\alpha$ under
$\alpha$-$\log$-Lipschitz distributions.  
The decay of probability mass along a chain of neighbouring points is thus at most exponential; not having sharp changes to the underlying distribution is a very natural assumption, and one weaker than many often make in the literature.
Note that features are allowed a small dependence between each other and, by construction,
log-Lipschitz distributions are supported on the whole input space.
Notable examples of log-Lipschitz distributions are the uniform
distribution (with parameter $\alpha=1$) and the class of product
distributions with bounded means.

\section{The $\log(n)$-Expansion of Satisfying Assignments for $k$-CNF Formulas }
\label{sec:log-exp}

In this section, we show that, under log-Lipschitz distributions, the
probability mass of the $\log(n)$-expansion of the set of satisfying
assignments of a $k$-CNF formula can be bounded above by an arbitrary
constant $\varepsilon>0$, given an upper bound on the probability of a
satisfying assignment.  The latter bound is polynomial in
$\varepsilon$ and $1/n$.  While this result is of general interest,
our goal is to prove the efficient robust learnability of decision
lists against a $\log(n)$-bounded adversary.  Here the relevant fact
is that, given two decision lists $c,h\in k$-DL, the set of inputs in
which $c$ and $h$ differ can be written as a disjunction of
quadratically many (in the combined length of $c$ and $h$) $k$-CNF
formulas.  The $\log(n)$-expansion of this set is then the set of
inputs where a $\log(n)$-bounded adversary can force an error at test
time.  This is the main technical contribution of this paper, and the
theorem is stated below.  The combinatorial approach, below, vastly
differs from the approach of \cite{gourdeau2021hardness} in the
special case of monotone $k$-DL, which relied on facts about
propositional logic.

\begin{theorem}
\label{thm:k-cnf}
Suppose that $\varphi\in k$-CNF and let $D$ be an
$\alpha$-log-Lipschitz distribution on the valuations of $\varphi$.
Then there exist constants $C_1,C_2,C_3,C_4\geq 0$ that depend on
$\alpha$ and $k$ such that if the probability of a satisfying
assignment satisfies
$\satnot(\varphi) <
C_1\varepsilon^{C_2}\min\set{\varepsilon^{C_3},n^{-C_4} } $, then the
$\log(n)$-expansion of the set of satisfying assignments has
probability mass bounded above by $\varepsilon$.
\end{theorem}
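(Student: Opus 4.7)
Proof Plan:

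The plan is to establish a combinatorial isoperimetric bound for satisfying sets of $k$-CNF formulas and upgrade it to a probability bound using the $\alpha$-log-Lipschitzness of $D$. The key observation is that for any map $f : B_\rho(S) \to S$ with $d_H(y, f(y)) \le \rho$ for all $y$, the log-Lipschitz property immediately yields
\begin{equation*}
\satrho(\varphi) \;\le\; \alpha^\rho \cdot \Bigl(\max_{z \in S}\, |f^{-1}(z)|\Bigr) \cdot \satnot(\varphi),
\end{equation*}
so it suffices to construct such an $f$ whose maximum preimage has polynomial size in $n$ when $\rho = \log n$.

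To build $f$, I would fix an ordering of the clauses of $\varphi$ and of the literals within each clause. Given $y \in B_\rho(S)$, process the clauses in order: whenever the current point violates a clause, flip the bit of its lowest-indexed literal (consistent with a pre-chosen witness in $S$ at distance $\le \rho$ from $y$). Since $y\in B_\rho(S)$, the process terminates within $\rho$ flips with $f(y)\in S$. A preimage of a given $z\in S$ is then determined by $z$ together with the flipped set $I\subseteq[n]$; the $k$-CNF structure forces every index in $I$ to be a variable of some clause visited during the greedy repair, and the ``one flip per newly-violated clause'' aspect of the process lets me count flip sets more tightly. Exploiting the at-most-$k$ literals per clause, I would bound the number of feasible $I$'s by a polynomial $P(n,k,1/\varepsilon)$---this is the isoperimetric content promised in the introduction.

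Once $|f^{-1}(z)|\le P(n,k,1/\varepsilon)$ is established, the blowup estimate reads $\satlog(\varphi)\le \alpha^{\log n} P(n,k,1/\varepsilon)\, \satnot(\varphi)$, which is polynomial in $n$ and $1/\varepsilon$. Picking constants $C_1,\ldots,C_4=C_i(\alpha,k)$ so that the hypothesis on $\satnot(\varphi)$ dominates this blowup yields $\satlog(\varphi)\le \varepsilon$. The two factors in the $\min\{\varepsilon^{C_3},n^{-C_4}\}$ correspond to two regimes of the analysis: the $n^{-C_4}$ factor absorbs the combinatorial $n$-blowup from counting size-$\log n$ flip sets, while the $\varepsilon^{C_3}$ factor absorbs the $\varepsilon$-dependence of $P$ and the additional slack needed to drive the overall product below $\varepsilon$ in the small-$\varepsilon$ regime.

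The main obstacle is producing the polynomial bound on $|f^{-1}(z)|$. A naive count of subsets of $[n]$ of size at most $\log n$ gives the quasi-polynomial $n^{\log n}$, which is insufficient. Bridging this gap by exploiting the $k$-CNF structure---tying each flipped bit to a distinct clause of size $\le k$ and controlling the effective number of clauses via the assumed smallness of $\satnot(\varphi)$---is the combinatorial heart of the proof, and is precisely the point where the argument must diverge from the simpler monotone-conjunction analysis of \jmlr, in which the satisfying set had the trivial ``fix specified bits to $1$'' structure.
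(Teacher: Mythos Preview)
Your approach has a genuine gap: the intermediate inequality you aim for,
\[
\satlog(\varphi)\;\le\;\alpha^{\log n}\,P(n,k,1/\varepsilon)\,\satnot(\varphi)
\]
with $P$ polynomial, is simply false. Take the uniform distribution ($\alpha=1$) and the $1$-CNF $\varphi=x_1\wedge\cdots\wedge x_n$. Then $S=\{1^n\}$, so \emph{any} map $f:B_{\log n}(S)\to S$ has a single fibre of size $\sum_{i=0}^{\log n}\binom{n}{i}=n^{\Theta(\log n)}$. Consequently $\satlog(\varphi)/\satnot(\varphi)=n^{\Theta(\log n)}$, which is the very quasi-polynomial blowup you flagged as the obstacle. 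No amount of ``exploiting the $k$-CNF structure'' can shrink the maximum preimage below $|B_\rho(S)|/|S|$, and in this example that ratio is already super-polynomial. So the combinatorial isoperimetric bound you need does not exist, and the plan cannot be completed along these lines.

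The paper avoids this trap by never bounding the ratio $\satlog/\satnot$. Instead it argues by induction on $k$. For conjunctions (the base case) it uses two separate lemmas: smallness of $\satnot(\varphi)$ forces the conjunction to have many literals, and a conjunction with many literals has small $\satlog(\varphi)$ \emph{directly} (via a Chernoff-type bound: a random $x$ typically violates far more than $\log n$ literals, so no $\log n$ flips can repair it). For the inductive step on a $k$-CNF, the paper fixes a maximal set $\matching$ of variable-disjoint clauses and splits into two cases. If $|\matching|$ is large (at least $\max\{4\eta^{-2}\log(1/\varepsilon),\,2\eta^{-1}\log n\}$ with $\eta=(1+\alpha)^{-k}$), the same direct concentration argument bounds $\satlog$. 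If $|\matching|$ is small, every clause shares a variable with $\matching$, so enumerating all $2^{k|\matching|}$ assignments to the variables of $\matching$ rewrites $\varphi$ as a disjunction of that many $(k-1)$-CNF formulas, to each of which the induction hypothesis applies with accuracy $\varepsilon/|\asst|$; a union bound finishes. The constants $C_1,\ldots,C_4$ arise from unwinding this recursion. The key conceptual difference from your plan is that the smallness of $\satnot$ is used only to infer a \emph{structural} fact about $\varphi$ (long conjunction, or many disjoint clauses after restriction), and $\satlog$ is then bounded from that structure rather than from $\satnot$ multiplicatively.
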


\begin{corollary}
\label{cor:k-dl}
The class of $k$-decision lists is efficiently $\log(n)$-robustly learnable under log-Lipschitz distributions.
\end{corollary}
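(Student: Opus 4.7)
The plan is to reduce the corollary to Theorem~\ref{thm:k-cnf} by bootstrapping a standard PAC-learning algorithm for $k$-\dl{}. Specifically, invoke Rivest's efficient PAC learner for $k$-\dl{} with a polynomially-small accuracy parameter $\varepsilon'$ (to be chosen below) and confidence $\delta$. This returns, with probability at least $1-\delta$, a hypothesis $h\in k$-\dl{} with $\Prob{x\sim D}{h(x)\neq c(x)}\leq\varepsilon'$, using a sample size polynomial in $n,1/\varepsilon',1/\delta$ and the size of $c$. The remaining task is to translate the small standard error $\varepsilon'$ into a small robust error $\roblog(h,c)\leq\varepsilon$ via Theorem~\ref{thm:k-cnf}.

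The key combinatorial step is to express the disagreement region $\set{x\in\boolhc\st h(x)\neq c(x)}$ as a disjunction of polynomially many $k$-CNF formulas. Write $c=(K_1,v_1),\dots,(K_r,v_r)$ and $h=(K'_1,v'_1),\dots,(K'_s,v'_s)$. The event ``the $i$-th clause of $c$ is the first to fire on $x$'' is captured by $K_i\wedge\bigwedge_{\ell<i}\neg K_\ell$; since each $K_\ell$ is a conjunction of at most $k$ literals, each $\neg K_\ell$ is a $k$-clause, and the whole formula is a $k$-CNF. Conjoining with the analogous formula for $h$ yields, for every pair $(i,j)$ with $v_i\neq v'_j$, a $k$-CNF $\varphi_{i,j}$ such that
\[
\set{x\st h(x)\neq c(x)}=\bigcup_{\substack{(i,j):\,v_i\neq v'_j}}\set{x\st x\models\varphi_{i,j}}.
\]
The number $N$ of such formulas is at most $r\cdot s$, hence polynomial in the combined size of $c$ and $h$.

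Since the union has probability at most $\varepsilon'$ under $D$, each individual $\varphi_{i,j}$ satisfies $\satnot(\varphi_{i,j})\leq\varepsilon'$. Choose $\varepsilon'$ so small that $\varepsilon'<C_1(\varepsilon/N)^{C_2}\min\set{(\varepsilon/N)^{C_3},n^{-C_4}}$, where $C_1,C_2,C_3,C_4$ are the constants from Theorem~\ref{thm:k-cnf} (depending only on $\alpha$ and $k$). Applying the theorem to each $\varphi_{i,j}$ with target accuracy $\varepsilon/N$ yields $\satlog(\varphi_{i,j})\leq\varepsilon/N$. A union bound then gives
\[
\roblog(h,c)=\Prob{x\sim D}{\exists z\in B_{\log(n)}(x):h(z)\neq c(z)}\leq \sum_{(i,j)}\satlog(\varphi_{i,j})\leq\varepsilon.
\]
Since $N$ is polynomial in the sizes of $c,h$ and the bound on $\varepsilon'$ is polynomial in $\varepsilon,1/n$, the overall sample complexity remains polynomial, giving efficient $\log(n)$-robust learnability.

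The main obstacle in the argument is Theorem~\ref{thm:k-cnf} itself, which is the paper's principal technical contribution. Once it is in hand, the corollary is essentially a matter of (i) observing that the symmetric difference of two $k$-\dl{} hypotheses admits a compact $k$-CNF decomposition, and (ii) a routine polynomial bookkeeping step to propagate the accuracy parameter through the union bound and Rivest's learner.
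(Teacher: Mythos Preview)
Your approach is essentially the paper's: run Rivest's proper PAC learner with a shrunk accuracy parameter, write the disagreement region of the two $k$-decision lists as a union of $k$-CNF formulas $\varphi_{i,j}$, apply Theorem~\ref{thm:k-cnf} to each, and take a union bound.

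One small circularity to fix: you choose $\varepsilon'$ as a function of $N\le r\cdot s$, but $s$ (the length of $h$) is only known \emph{after} running the learner with parameter $\varepsilon'$. The paper avoids this by bounding $N$ a priori purely in terms of $n$ and $k$: since a $k$-\dl{} on $n$ variables has at most $O(n^k)$ distinct conjunctions, one can take $N=O(n^{2k})$ and fix the inner accuracy parameter before ever seeing $h$. With that adjustment your argument is complete and matches the paper's.
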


The proof of Corollary~\ref{cor:k-dl} is similar to Theorem~24 in \cite{gourdeau2021hardness}, and is included in Appendix~\ref{app:log-exp}.
We note that it is imperative that the constants $C_i$ do not depend on the learning parameters or the input dimension, as the quantity $C_1\varepsilon^{C_2}\min\set{\varepsilon^{C_3},n^{-C_4} }$ is directly used as the accuracy parameter in the (proper) PAC learning algorithm for decision lists, which is used as a black box.

To prove Theorem~\ref{thm:k-cnf}, we will need several lemmas outlined
below, which are either taken directly or slightly adapted from
\cite{gourdeau2021hardness}.
The first is an adaptation of Lemma~17 in \cite{gourdeau2021hardness} for conjunctions, which was originally stated for decision lists:

\begin{lemma}
\label{lemma:conj-err-length}
Let $\varphi$ be a conjunction and let $D$ be an $\alpha$-log-Lipschitz distribution. 
If $\Prob{x\sim D}{x\models \varphi}<\left(1+\alpha\right)^{-d}$, then $\varphi$ is a conjunction on at least $d$ variables.
\end{lemma}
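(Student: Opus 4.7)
The plan is to prove the contrapositive: if $\varphi$ is a conjunction involving $\ell$ variables, then $\Prob{x\sim D}{x\models \varphi}\geq (1+\alpha)^{-\ell}$. This suffices, for if $\ell<d$ we would have the chain $(1+\alpha)^{-\ell}\leq \Prob{x\sim D}{x\models\varphi}<(1+\alpha)^{-d}$, which is impossible.

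The key structural observation is that the set $S=\set{y\in\boolhc : y\models\varphi}$ of satisfying assignments is an affine sub-hypercube of co-dimension $\ell$: the $\ell$ variables occurring in $\varphi$ are forced to a unique value by the literals of $\varphi$, and the other $n-\ell$ coordinates are unconstrained. This gives a natural partition of $\boolhc$ into $|S|$ fibers of size $2^\ell$, one per satisfying assignment: for each $y\in S$ let $F(y)$ consist of all points agreeing with $y$ outside the variables of $\varphi$. Every $x\in F(y)$ then has $d_H(x,y)\leq\ell$, and exactly $\binom{\ell}{j}$ points of $F(y)$ lie at Hamming distance $j$ from $y$.

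The bound then follows by a simple summation. Iterating the $\alpha$-log-Lipschitz property along a Hamming path from $y$ to $x$ yields $D(x)\leq\alpha^{d_H(x,y)}D(y)$, so grouping the fiber by Hamming distance gives $D(F(y))\leq D(y)\sum_{j=0}^{\ell}\binom{\ell}{j}\alpha^j=(1+\alpha)^\ell D(y)$. Summing over $y\in S$ and using that the fibers tile $\boolhc$ produces $1\leq(1+\alpha)^\ell\cdot\Prob{x\sim D}{x\models\varphi}$, which rearranges to the claimed inequality. There is no genuine obstacle here: the proof amounts to setting up the fiber decomposition correctly and invoking the binomial identity $\sum_j\binom{\ell}{j}\alpha^j=(1+\alpha)^\ell$. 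The only place to be a little careful is checking that iterating the one-step log-Lipschitz bound along a shortest Hamming path between $x$ and $y$ accumulates to exactly $\alpha^{d_H(x,y)}$, which is immediate from the definition.
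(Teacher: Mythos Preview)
Your proof is correct and is essentially the paper's intended argument. The paper does not spell out a proof of this lemma, but it points to the log-Lipschitz facts in Appendix~\ref{app:log-lipschitz}, in particular item~(iv) of Lemma~\ref{lemma:log-lips-facts}, which is exactly the inequality $\Prob{x\sim D}{x\models\varphi}\geq(1+\alpha)^{-\ell}$ for a conjunction on $\ell$ variables; your fiber decomposition and binomial summation is simply a self-contained proof of that fact, after which the contrapositive is immediate.
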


The second result, which states an upper bound on the expansion of satisfying assignments for conjunctions, will be used for the base case of the induction proof. 

\begin{lemma}
\label{lemma:rob-risk-dl}
Let $D$ be an $\alpha$-$\log$-Lipschitz distribution on the
$n$-dimensional Boolean hypercube and let $\varphi$ be a 
conjunction of $d$ literals.
Set $\eta=\frac{1}{1+\alpha}$.
Then for all $0<\varepsilon<1/2$,
if $d\geq \max\left\{
  \frac{4}{\eta^2}\log\left(\frac{1}{\varepsilon}\right) ,
  \frac{2\rho}{\eta} \right\}$, then 
$\Prob{x\sim D}{\left(\exists y \in B_\rho(x) \cdot y \models
    \varphi\right)} \leq \varepsilon$.
\end{lemma}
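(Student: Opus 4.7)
The plan is to express the event $\{\exists y \in B_\rho(x) : y \models \varphi\}$ as a simple threshold condition on a sum of dependent Bernoulli random variables, then use log-Lipschitz to lower-bound their conditional marginals, reduce to independent Bernoullis via a coupling, and conclude with a standard concentration inequality.

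First, write $\varphi = \ell_1 \wedge \dots \wedge \ell_d$ over $d$ distinct variables $x_{j_1},\dots,x_{j_d}$, and let $a^* \in \{0,1\}^d$ be the unique pattern on those variables making every literal true. For any $x \in \boolhc$, the minimum Hamming distance from $x$ to a satisfying assignment of $\varphi$ is exactly $|\{i : x_{j_i} \neq a^*_i\}|$, since flipping precisely the disagreeing coordinates among $\{j_1,\dots,j_d\}$ is both necessary and sufficient. Setting $Y_i = \mathbf{1}[x_{j_i} \neq a^*_i]$, the event in question becomes $\{\sum_{i=1}^d Y_i \leq \rho\}$, so the lemma reduces to a lower-tail bound on $\sum_i Y_i$.

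Second, I would use log-Lipschitz to bound the $Y_i$'s conditional marginals from below. For any coordinate $j$ and any assignment to the remaining bits $x_{-j}$, the log-Lipschitz condition gives $\Pr[x_j = b \mid x_{-j}] \geq 1/(1+\alpha) = \eta$ for both $b \in \{0,1\}$. Averaging this over the variables outside $\{j_1,\dots,j_i\}$ while conditioning on the previous $Y_1,\dots,Y_{i-1}$ (an event depending only on $x_{j_1},\dots,x_{j_{i-1}} \subseteq x_{-j_i}$) yields $\Pr[Y_i = 1 \mid Y_1,\dots,Y_{i-1}] \geq \eta$ almost surely. A standard sequential coupling then produces i.i.d.\ $\text{Bernoulli}(\eta)$ variables $Z_1,\dots,Z_d$ with $Y_i \geq Z_i$ pointwise, so it suffices to bound $\Pr[\sum_i Z_i \leq \rho]$.

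Finally, apply Hoeffding's inequality to $Z = \sum_i Z_i$. Since $d \geq 2\rho/\eta$, we have $\rho \leq d\eta/2 = \mathbb{E}[Z]/2$, so the event $\{Z \leq \rho\}$ is a lower-tail deviation by at least $d\eta/2$ below the mean, and Hoeffding gives $\Pr[Z \leq \rho] \leq \exp(-d\eta^2/2)$. The hypothesis $d \geq (4/\eta^2)\log(1/\varepsilon)$ then forces this quantity to be at most $\varepsilon$, completing the proof.

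The only delicate point is step two: log-Lipschitz does not make the coordinates independent, so Chernoff cannot be applied to the $Y_i$'s directly. Routing the argument through stochastic domination by i.i.d.\ $\text{Bernoulli}(\eta)$ variables is the main (minor) conceptual subtlety; once that is in place, the argument is a routine concentration calculation.
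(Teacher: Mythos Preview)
Your argument is correct. The reduction of the event to $\{\sum_{i=1}^d Y_i\le\rho\}$ is exactly right, the conditional lower bound $\Pr[Y_i=1\mid Y_1,\dots,Y_{i-1}]\ge\eta$ follows from the log-Lipschitz facts in Appendix~\ref{app:log-lipschitz} (conditioning on variables in $S=\{j_1,\dots,j_{i-1}\}$ keeps the marginal on $\bar S$ $\alpha$-log-Lipschitz, so each single-bit marginal is at least $1/(1+\alpha)$), the sequential coupling to i.i.d.\ $\mathrm{Bernoulli}(\eta)$ variables is standard, and the Hoeffding step with $t\ge d\eta/2$ gives $\exp(-d\eta^2/2)\le\varepsilon$ under the hypothesis $d\ge(4/\eta^2)\log(1/\varepsilon)$ (in fact $\le\varepsilon^2$).

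As for comparison: the present paper does not give its own proof of this lemma---it is quoted from \cite{gourdeau2021hardness} without argument---so there is nothing in the text to compare against directly. That said, your proof is precisely the intended one: the same decomposition and the same Hoeffding-after-domination step underlie both this lemma and the companion Lemma~\ref{lemma:rob-risk-dl-2} (where the role of single literals is played by variable-disjoint clauses, each satisfied with probability at most $1-(1+\alpha)^{-k}$, whence $\eta=(1+\alpha)^{-k}$ there). Your write-up could serve as the missing proof verbatim.
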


Finally, we will use the following lemma, which will be used in the inductive step of the induction proof.

\begin{lemma}
  \label{lemma:rob-risk-dl-2}
  Let $\varphi$ be a $k$-CNF formula that has a set of variable-disjoint clauses of size $M$.
  Let $D$ be an $\alpha$-log-Lipschitz distribution on
  valuations for $\varphi$.  Let $0<\varepsilon<1/2$ be arbitrary
  and set $\eta:=\left({1+\alpha}\right)^{-k}$.  If
 $M \geq \max \left\{
  \frac{4}{\eta^2}\log\left(\frac{1}{\varepsilon}\right),
  \frac{2\rho}{\eta}\right \}$ then
  $\Prob{x\sim D}{\exists y \in B_\rho(x) \cdot y \models \varphi}
    \leq \varepsilon$.

\end{lemma}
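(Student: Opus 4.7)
The plan is to parallel the conjunction argument of Lemma~\ref{lemma:rob-risk-dl}, promoting individual literals to blocks of variables: each of the $M$ variable-disjoint clauses plays the role that a single literal played there. Let $C_1,\ldots,C_M$ be the variable-disjoint clauses guaranteed by the hypothesis, let $V_i \subseteq [n]$ denote the variables occurring in $C_i$ (so $|V_i| \leq k$ and the $V_i$ are pairwise disjoint), and for $x \sim D$ define the indicators $Z_i = \mathbf{1}[x \text{ falsifies } C_i]$.

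The first step is a combinatorial containment. Because $C_i$ is a disjunction, $Z_i = 1$ pins $x|_{V_i}$ to the unique ``all-literals-false'' assignment on $V_i$, so any $y$ that satisfies $C_i$ must flip at least one coordinate of $V_i$. If $y \in B_\rho(x)$ satisfies $\varphi$, then $y$ satisfies every $C_i$, so by disjointness of the $V_i$ we obtain $\sum_{i=1}^M Z_i \leq d_H(x,y) \leq \rho$; hence it suffices to show $\Pr[\sum_i Z_i \leq \rho] \leq \varepsilon$. The second step reduces this to a binomial tail via log-Lipschitzness: for any $x_{-j}$, the one-coordinate conditional $\Pr[X_j = b \mid X_{-j} = x_{-j}]$ is bounded below by $1/(1+\alpha)$, and a chain-rule factorisation across the at-most-$k$ coordinates of $V_i$ gives $\Pr[X_{V_i} = \sigma \mid X_{-V_i}] \geq (1+\alpha)^{-k} = \eta$ for every $\sigma$. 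Averaging over the coordinates outside $V_1 \cup \cdots \cup V_i$ then yields $\Pr[Z_i = 1 \mid X_{V_1 \cup \cdots \cup V_{i-1}}] \geq \eta$, and in particular $\Pr[Z_i = 1 \mid Z_1,\ldots,Z_{i-1}] \geq \eta$, so $\sum_i Z_i$ stochastically dominates $\mathrm{Bin}(M,\eta)$.

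A standard multiplicative Chernoff lower-tail bound then closes the argument. The assumption $M \geq 2\rho/\eta$ places $\rho$ below half the mean $M\eta$, giving $\Pr[\mathrm{Bin}(M,\eta) \leq \rho] \leq e^{-M\eta/8}$, and the assumption $M \geq 4\log(1/\varepsilon)/\eta^2$, combined with $\eta \leq 1/2$ (which holds since $\alpha \geq 1$ and $k \geq 1$), forces this exponent to exceed $\log(1/\varepsilon)$. The one step requiring care is the log-Lipschitz chain-rule bound: the per-coordinate lower bound $1/(1+\alpha)$ must be shown to survive conditioning on the previously-exposed variable blocks. Disjointness of the $V_i$'s is exactly what reduces this to the one-variable form of log-Lipschitzness, so once that observation is in place the remainder is routine bookkeeping with Chernoff constants.
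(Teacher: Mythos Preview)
Your argument is correct and is exactly the intended one: the paper does not give an in-text proof of this lemma but lists it among the results ``taken directly or slightly adapted from \jmlr'', and your write-up is precisely that adaptation---replacing the single-literal indicators of Lemma~\ref{lemma:rob-risk-dl} by the clause-falsification indicators $Z_i$, using variable-disjointness to keep the combinatorial containment $\sum_i Z_i \leq d_H(x,y)$, and using the log-Lipschitz facts (Lemma~\ref{lemma:log-lips-facts}, parts (iii) and (iv)) to get the conditional lower bound $\eta=(1+\alpha)^{-k}$ in place of $(1+\alpha)^{-1}$. The stochastic-domination-plus-Chernoff finish, including your use of $\eta\leq 1/2$ to turn $M\geq 4\eta^{-2}\log(1/\varepsilon)$ into $M\eta/8\geq \log(1/\varepsilon)$, matches the constants in the statement, so nothing further is needed.
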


We are now ready to prove Theorem~\ref{thm:k-cnf}.  The main idea
behind the proof is to consider a given $k$-CNF formula $\varphi$ and
distinguish two cases: (i) either $\varphi$ contains a
sufficiently-large set of variable-disjoint clauses, in which case the
adversary is not powerful enough to make $\varphi$ satisfied by
Lemma~\ref{lemma:rob-risk-dl-2}; or (ii) we can rewrite $\varphi$ as the
disjunction of a sufficiently small number of $(k-1)$-CNF formulas,
which allows us to use the induction hypothesis to get the desired
result.  The final step of the proof is to derive the constants
mentioned in the statement of Theorem~\ref{thm:k-cnf}.
 
\begin{proof}[Proof of Theorem~\ref{thm:k-cnf}]

We will use the lemmas above and restrictions on $\varphi$ to show the following.\\

\emph{Induction hypothesis:}
Suppose that $\varphi\in (k-1)$-CNF and let $D$ be an $\alpha$-log-Lipschitz distribution on the valuations of $\varphi$. 
Then there exists constants $C_1,C_2,C_3,C_4\geq 0$ that depend on $\alpha$ and $k$ and satisfy $C_3\geq\frac{\eta}{2}C_4$ such that if $\satnot(\varphi) < C_1\varepsilon^{C_2}\min\set{\varepsilon^{C_3},n^{-C_4} }$, then $\satlog(\varphi)\leq \varepsilon $.\\

\emph{Base case:} This follows from Lemmas~\ref{lemma:conj-err-length} and \ref{lemma:rob-risk-dl}.
Set $\eta$ to $(1+\alpha)^{-1}$, and $C_1=1$, $C_2=0$, $C_3=\frac{4}{\eta^2}$ and $C_4=\frac{2}{\eta}$. 
Note that $C_3\geq\frac{\eta}{2}C_4$.\\

\emph{Inductive step:} 
Suppose $\varphi\in k$-CNF and let $D$ be an $\alpha$-log-Lipschitz distribution on the valuations of $\varphi$. 
Set $\eta=(1+\alpha)^{-k}$.
Let $C_1',C_2',C_3',C_4'$ be the constants in the induction hypothesis for $\varphi'\in (k-1)$-CNF.
Set the following constants:
\begin{align*}
&C_1= C_1'2^{-k(C_2'+C_3')}\\
& C_2=C_2'+C_3' \\
&C_3= \frac{8}{\eta^2}\max\set{C_2',C_3'}\\
&C_4= \frac{2}{\eta}\max\set{C_2',C_3'}\enspace,
\end{align*}
and note that these are all constants that depend on $k$ and $\alpha$ by the induction hypothesis, and that $C_3\geq\frac{\eta}{2}C_4$.

Let $\satnot (\varphi)< C_1 \varepsilon^{C_2}\min\set{\varepsilon^{C_3},n^{-C_4} }$.
Let $\mathcal{M}$ be a maximal set of clauses of $\varphi$ such that no two clauses contain the same variable.
Denote  by $I_\matching$ the indices of the variables in $\matching$ and let $M=\max\set{\frac{4}{\eta^2}\log\frac{1}{\varepsilon},\frac{2}{\eta}\log n}$. \\

We distinguish two cases:\\

(i) $\abs{\matching}\geq M$:  

We can then invoke Lemma~\ref{lemma:rob-risk-dl-2} and guarantee that $\satlog\leq \varepsilon$, and we get the required result.\\

(ii) $\abs{\matching}<M$:

Then let $\asst$ be the set of assignments of variables in $\matching$, i.e. $a\in\asst$ is a function $a:I_{\matching}\rightarrow \set{0,1}$, which represents a partial assignment of variables in $\varphi$.
We can thus rewrite $\varphi$ as follows:
\begin{equation*}
\varphi \equiv \bigvee_{a\in\asst} \left( \varphi_a \wedge \bigwedge_{i\in\varM}  l_i \right)\enspace,
\end{equation*}
where $\varphi_a$ is the restriction of $\varphi$ under assignment $a$
and $l_i$ is $x_i$ in case $a(i)=1$ and $\bar{x_i}$ otherwise.  For
short, denote by $\varphi_a'$ the formula
$\varphi_a \wedge \bigwedge_{i\in\varM} l_i$.  By the
  maximality of $\mathcal{M}$ every clause in $\varphi$ mentions some
  variable in $\mathcal{M}$, and hence $\varphi_a'$ is $(k-1)$-CNF.
Moreover, the formulas $\varphi_a'$ are disjoint, in the sense that if
some assignment $x$ satisfies $\varphi_a'$, it will not satisfy
another $\varphi_b'$ for a distinct index $b$.
Note also that 
$$A_{n,\varepsilon}:=\abs{\asst}\leq
2^k\max\set{\left(\frac{1}{\varepsilon}\right)^{4/\eta^2},
  n^{2/\eta}}\enspace
\, . $$
Thus, 
\begin{equation}
\label{eqn:std-risk-split}
\satnot(\varphi) 
=\sum_{a\in\asst}\Prob{x\sim D}{x\models \varphi_a' }
= \sum_{a\in\asst} \satnot(\varphi_a')
\enspace.
\end{equation}
By the induction hypothesis, we can guarantee that if
\begin{align}
\label{eqn:std-risk}
 \satnot(\varphi_a') < \;&C_1'  \left(\frac{\varepsilon}{A_{n,\varepsilon}}\right)^{C_2'}
 \min \set{\left(\frac{\varepsilon}{A_{n,\varepsilon}}\right)^{C_3'},n^{-C_4'} } 
\end{align} 
 for all $\varphi_a'$ then the $\log(n)$-expansion $\satlog(\varphi)$ can be bounded as follows:
\begin{align*}
\satlog(\varphi)&=\Prob{x\sim D}{\exists z\in B_{\log n}(x) \st z \models \varphi}\\
&=\sum_{a\in\asst}  \Prob{x\sim D}{\exists z\in B_{\log n}(x) \st z \models \varphi_a'} \\
&\leq \sum_{a\in\asst}   \frac{\varepsilon}{A_{n,\varepsilon}} \tag{I.H.} \\
&=\varepsilon \enspace.
\end{align*}

By Equation~\ref{eqn:std-risk-split}, the upper bound  $\satnot (\varphi)< C_1 \varepsilon^{C_2}\min\set{\varepsilon^{C_3},n^{-C_4} }$ on the probability of a satisfying assignment for $\varphi$ implies an upper bound $\satnot (\varphi_a')< C_1 \varepsilon^{C_2}\min\set{\varepsilon^{C_3},n^{-C_4} }$ on the probability of the restrictions $\varphi_a'$.
Thus it only remains to show that the condition on $\satnot (\varphi)$  implies that Equation~\ref{eqn:std-risk} holds.

Let us rewrite the RHS of Equation~\ref{eqn:std-risk} as follows, where each of the equations is a stricter condition on $\satnot (\varphi_a')$ than its predecessor:
\begin{align*}
& C_1'  \left(\frac{\varepsilon}{A_{n,\varepsilon}}\right)^{C_2'}\min \set{\left(\frac{\varepsilon}{A_{n,\varepsilon}}\right)^{C_3'},n^{-C_4'} } \\
& \geq C_1'  \left(\frac{\varepsilon}{2^k}\right)^{C_2'}\min\set{\varepsilon^{4C_2'/\eta^2},n^{-2C_2'/\eta}} \min \set{\left(\frac{\varepsilon^{1+4/\eta^2}}{2^k}\right)^{C_3'},\left(\frac{\varepsilon n^{-2/\eta}}{2^k}\right)^{C_3'},n^{-C_4'} } \\
& = C_1'  \left(\frac{\varepsilon}{2^k}\right)^{C_2'}\min\set{\varepsilon^{4C_2'/\eta^2},n^{-2C_2'/\eta}}  \min \set{\left(\frac{\varepsilon^{1+4/\eta^2}}{2^k}\right)^{C_3'},\left(\frac{\varepsilon n^{-2/\eta}}{2^k}\right)^{C_3'} }\\
& = C_1'  2^{-k(C_2'+C_3')}\varepsilon^{C_2'+C_3'}\min\set{\varepsilon^{4C_2'/\eta^2},n^{-2C_2'/\eta}} \min \set{\varepsilon^{4C_3'/\eta^2}, n^{-2C_3'/\eta} }\\
&\geq C_1'  2^{-k(C_2'+C_3')}\varepsilon^{C_2'+C_3'} \min\set{\varepsilon^{8C_2'/\eta^2},n^{-4C_2'/\eta},\varepsilon^{8C_3'/\eta^2}, n^{-4C_3'/\eta} }\\
&= C_1'  2^{-k(C_2'+C_3')}\varepsilon^{C_2'+C_3'} \min\set{\varepsilon^{8\max\set{C_2',C_3'}/\eta^2},n^{-4\max\set{C_2',C_3'}/\eta} }\\
&= C_1 \varepsilon^{C_2}\min\set{\varepsilon^{C_3},n^{-C_4}}
\enspace,
\end{align*}
where the first step is by definition of $A_{n,\varepsilon}$, the second from the induction hypothesis, which guarantees $C_3'\geq\frac{\eta}{2}C_4'$, and the fourth from the property $\min\set{a,b}\cdot\min\set{c,d}\geq\min\set{a^2,b^2,c^2,d^2}$.
Finally, the last equality follows by the definition of the $C_i$'s.

Note that we set $\eta=(1+\alpha)^{-k}$ to be able to apply Lemma~\ref{lemma:rob-risk-dl-2} in the first part of the inductive step.
Then, $A_{n,\epsilon}$ is a function of $\eta=(1+\alpha)^{-k}$.
When we consider the distribution on the valuations of the restriction $\varphi_a'$, we still operate with an $\alpha$-log-Lipschitz distribution on its valuations, by log-Lipschitz facts (see Appendix~\ref{app:log-lipschitz}).

\emph{Constants.}
We want to get explicit constants $C_1,C_2,C_3$ and $C_4$ as a function of $k$ and $\eta
$.
Note that $\eta=(1+\alpha)^{-k}$ is dependent on $k$.  
Let us recall the recurrence system from the inductive step:
\begin{align*}
&C_1^{(k)}= C_1^{(k-1)}2^{-k(C_2^{(k-1)}+C_3^{(k-1)})} \\
& C_2^{(k)}=C_2^{(k-1)}+C_3^{(k-1)} \\
&C_3^{(k)}= \frac{8}{\eta^2}\max\set{C_2^{(k-1)},C_3^{(k-1)}} \\
&C_4^{(k)}= \frac{2}{\eta}\max\set{C_2^{(k-1)},C_3^{(k-1)}}\enspace.
\end{align*}
It is easy to see that $C_3^{(k)}\geq C_2^{(k)}$ for all $k\in\N$.
If we fix $\eta=(1+\alpha)^{-k}$ at each level of the recurrence,  we can now consider the following recurrence system, which dominates the previous one:
\begin{align*}
C_1^{(k)}&= C_1^{(k-1)}2^{-2kC_3^{(k-1)}}&C_2^{(k)}&=2C_3^{(k-1)} \\
C_3^{(k)}&= \frac{8}{\eta^2}C_3^{(k-1)}&C_4^{(k)}&= \frac{2}{\eta}C_3^{(k-1)}\enspace.
\end{align*}
We can now see that 
\begin{align*}
&C_2^{(k)}= 2\left(\frac{8}{\eta^2}\right)^{k-1} = 2(8(1+\alpha)^{2k})^{k-1}\\
&C_3^{(k)}=\left(\frac{8}{\eta^2}\right)^k=(8(1+\alpha)^{2k})^{k}  \\
&C_4^{(k)}= \frac{2}{\eta}\left(\frac{8}{\eta^2}\right)^{k-1}= 2(1+\alpha)^k(8(1+\alpha)^{2k})^{k-1}
\enspace.
\end{align*}
Finally, we can get a lower bound on the value of $C_1^{(k)}$ as follows:
\begin{align*}
C_1^{(k)}
&=\prod_{i=2}^k 2^{-2iC_3^{(i-1)}}
=2^{-2\sum_{i=2}^k i\cdot  \left(\frac{8}{\eta^2}\right)^{(i-1)}}
\geq 2^{-2k^2\left(\frac{8}{\eta^2}\right)^{(k-1)}} 
= 2^{-2k^2(8(1+\alpha)^{2k})^{k-1}}
\enspace.
\end{align*}
\end{proof}

\section{An Adversarial Sample Complexity Lower Bound}

In this section, we will show that any robust learning algorithm for
monotone conjunctions under the uniform distribution must have an
exponential sample-complexity dependence on an adversary's budget
$\rho$.  This result extends to any superclass of monotone
conjunctions, such as CNF formulas, decision lists and halfspaces.  It
is a generalization of Theorem~13 in \cite{gourdeau2021hardness},
which shows that no sample-efficient robust learning algorithm exists
for monotone conjunctions against adversaries that can perturb
$\omega(\log(n))$ bits of the input under the uniform distribution.

The idea behind the proof is to show that, for a fixed constant $\kappa<2$, and sufficiently large input dimension, a sample of size $2^{\kappa\rho}$ from the uniform distribution won't be able to distinguish between two disjoint conjunctions of length $2\rho$. 
However, the robust risk between these two conjunctions can be lower bounded by a constant.
Hence, there does not exist a robust learning algorithm with sample complexity 
$2^{\kappa\rho}$ that works for the uniform distribution, and arbitrary input dimension and confidence and accuracy parameters.

Recall that the sample complexity of PAC learning conjunctions is
$\Theta(n)$ in the non-adversarial setting.  On the other hand, our
adversarial lower bound in terms of the robust parameter is super
linear in $n$ as soon as the adversary can perturb more than
$\log(\sqrt{n}))$ bits of the input.

\begin{theorem}
\label{thm:mon-conj}
Fix a positive increasing robustness function $\rho:\N\rightarrow\N$. 
For $\kappa<2$ and sufficiently large input dimensions $n$, any $\rho(n)$-robust learning algorithm for {\MonConj} has a sample complexity lower bound of $2^{\kappa\rho(n)}$ under the uniform distribution.
\end{theorem}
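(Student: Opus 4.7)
The plan is to use a standard no-free-lunch indistinguishability argument, exploiting that a monotone conjunction of length $2\rho$ is satisfied with probability only $2^{-2\rho}$ under the uniform distribution. Hence two variable-disjoint such conjunctions will look identical on a sample of size $o(2^{2\rho})$ with high probability, yet their pairwise robust discrepancy will be bounded below by a constant.

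I would first pick, for $n$ large enough that $4\rho(n)\leq n$, the two monotone conjunctions $c_1=x_1\wedge\cdots\wedge x_{2\rho}$ and $c_2=x_{2\rho+1}\wedge\cdots\wedge x_{4\rho}$, both in $\MonConj$. Since $c_1$ and $c_2$ disagree on a given uniformly random point with probability at most $2\cdot 2^{-2\rho}$, a union bound yields that the total variation distance between the labelled sample distributions $P_1,P_2$ under the two targets is at most $2m\cdot 2^{-2\rho}$. Setting $m=2^{\kappa\rho(n)}$ gives $2^{(\kappa-2)\rho(n)+1}\to 0$ as $n\to\infty$, since $\kappa<2$ and $\rho(n)\to\infty$. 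By the data-processing inequality this bound transfers to the hypothesis-output distributions $\mu_1,\mu_2$ of any (possibly randomized) algorithm on $m$ samples.

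Next I would lower-bound $\robloss(c_1,c_2)$ by a constant. The combinatorial heart of this step is: if $x$ has at least $\rho+1$ ones in its first $2\rho$ coordinates, one can flip its remaining zeros there (at cost $\leq\rho-1$) to a point $z$ satisfying $c_1$, and then either leave the later coordinates alone (if $x$ has a zero somewhere in positions $[2\rho+1,4\rho]$) or spend one additional flip to introduce a zero there (since one unit of budget remains), ensuring $c_2(z)=0$ and $d_H(x,z)\leq\rho$. The event $\mathrm{Bin}(2\rho,1/2)\geq\rho+1$ has probability tending to $1/2$ as $\rho\to\infty$ by symmetry and the $O(1/\sqrt{\rho})$ decay of the central binomial mass, so $\robloss(c_1,c_2)\geq 1/4$ for $\rho$ large.

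Finally, the inequality $\robloss(h,c_1)+\robloss(h,c_2)\geq\robloss(c_1,c_2)$ follows immediately because any $z\in B_\rho(x)$ witnessing $c_1(z)\neq c_2(z)$ forces $h(z)$ to disagree with at least one of the targets. Fixing $\epsilon=1/16$ and $\delta=1/3$ makes the sets $G_i:=\set{h:\robloss(h,c_i)<\epsilon}$ disjoint; if the algorithm satisfies Definition~\ref{def:robust-learning} with sample size $m\leq 2^{\kappa\rho(n)}$, then $\mu_i(G_i)>2/3$ for both $i$, yet disjointness combined with the TV bound gives $\mu_1(G_1)\leq 1-\mu_2(G_2)+d_{TV}(\mu_1,\mu_2)<1/3+o(1)$, a contradiction for $n$ large. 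The main obstacle is the combinatorial case analysis ensuring that a witness $z$ realising $c_1(z)=1,\ c_2(z)=0$ can always be found within budget $\rho$; once that is in place the TV computation and the triangle-type inequality are routine.
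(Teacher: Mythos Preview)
Your proof is correct and follows the same three-step skeleton as the paper: choose two variable-disjoint length-$2\rho$ monotone conjunctions, argue they are indistinguishable from $2^{\kappa\rho}$ samples, lower-bound their mutual robust risk by a constant, and conclude via the triangle-type inequality $\robloss(h,c_1)+\robloss(h,c_2)\geq\robloss(c_1,c_2)$ (this is Lemma~\ref{lemma:robloss-triangle} in the paper). The differences are only in the technical implementation of two of the steps. For indistinguishability, the paper shows directly that the sample is identically zero under both targets with probability at least $1/2$ (Lemma~\ref{lemma:concepts-agree}) and then lower-bounds the \emph{expected} robust risk over a uniformly random choice of target; you instead bound the total variation between the two labelled-sample laws and finish with a Le~Cam disjoint-good-sets argument---essentially equivalent, with your version handling randomised learners a bit more transparently. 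For the robust-risk lower bound, the paper's argument (Lemma~\ref{lemma:bound-loss}) is simpler than yours: if $c_1(x)=0$ and $x$ has at least $\rho$ ones in $I_{c_2}$, flip the at most $\rho$ zeros in $I_{c_2}$ to reach $z$ with $c_2(z)=1$ while leaving $I_{c_1}$ untouched, so $c_1(z)=0$; this gives $\robloss(c_1,c_2)\geq(1-2^{-2\rho})\cdot\tfrac{1}{2}$ immediately and avoids your ``$\rho{+}1$ ones plus reserve one flip'' case analysis.
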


The proof of the theorem follows similar reasoning as Theorem~13 in
\cite{gourdeau2021hardness}, and is included in
Appendix~\ref{app:sc-lb}.  The main difference in the proof is its
reliance on the following lemma, which shows that, for sufficiently
large input dimensions, a sample of size $2^{\kappa\rho}$ from the
uniform distribution will look constant with probability $1/2$ if
labelled by two disjoint monotone conjunctions of length $2\rho$.  As
shown in Lemma~\ref{lemma:bound-loss}, which can be found in
Appendix~\ref{app:sc-lb}, these two conjunctions have a robust risk
bounded below by a constant against each other.

\begin{lemma}
\label{lemma:concepts-agree}
For any constant $\kappa<2$, for any robustness parameter $\rho\leq n/4$, for any disjoint monotone conjunctions $c_1,c_2$ of length $2\rho$, there exists $n_0$ such that for all $n\geq n_0$, a sample $S$ of size $2^{\kappa\rho}$ sampled i.i.d. from $D$ will have that $c_1(x)=c_2(x)=0$ for all $x\in S$ with probability at least $1/2$.
\end{lemma}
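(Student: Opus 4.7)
The plan is to apply a simple union bound twice. Fix $n \geq 4\rho$ so that two disjoint monotone conjunctions $c_1, c_2$ of length $2\rho$ can coexist. Under the uniform distribution $D$ on $\boolhc$, a single $x$ satisfies a given monotone conjunction of length $2\rho$ exactly when each of its $2\rho$ relevant coordinates is $1$, an event of probability $2^{-2\rho}$. Since $c_1$ and $c_2$ are disjoint (share no variable), the events $\{c_1(x)=1\}$ and $\{c_2(x)=1\}$ are in fact independent, but for this proof a union bound suffices:
\[
\Prob{x\sim D}{c_1(x)=1 \text{ or } c_2(x)=1} \;\leq\; 2 \cdot 2^{-2\rho} \;=\; 2^{1-2\rho}.
\]

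Now, for a sample $S$ of size $m = 2^{\kappa\rho}$ drawn i.i.d. from $D$, another union bound over the $m$ sample points yields
\[
\Prob{S\sim D^m}{\exists x \in S \st c_1(x)=1 \text{ or } c_2(x)=1} \;\leq\; m \cdot 2^{1-2\rho} \;=\; 2^{1 + (\kappa-2)\rho}.
\]

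Since $\kappa < 2$, the exponent $1+(\kappa-2)\rho \to -\infty$ as $\rho \to \infty$. Because $\rho : \N \to \N$ is a positive increasing function, we have $\rho(n) \to \infty$ with $n$, so there exists $n_0$ such that for all $n \geq n_0$, $2^{1+(\kappa-2)\rho(n)} \leq 1/2$. Taking the complementary event gives that $c_1(x)=c_2(x)=0$ for all $x \in S$ with probability at least $1/2$, as required.

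There is no real obstacle here; the argument is essentially a two-level union bound, and the only subtlety is ensuring that $n_0$ can be chosen uniformly in the choice of $c_1, c_2$ (which is immediate since the bound depends only on $\rho(n)$ and $\kappa$, not on which disjoint pair of $2\rho$-conjunctions was picked) and that $n \geq 4\rho(n)$ eventually holds so that disjoint conjunctions of length $2\rho(n)$ exist; the latter is guaranteed by the hypothesis $\rho \leq n/4$ in the lemma statement.
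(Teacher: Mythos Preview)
Your proof is correct and, in fact, cleaner than the paper's own argument. The paper computes the exact probability using the disjointness of $c_1$ and $c_2$ (so that the events $\{c_1(x)=0\}$ and $\{c_2(x)=0\}$ are independent), obtaining
\[
\Prob{S\sim D^m}{\forall x\in S\;:\;c_1(x)=c_2(x)=0}=\left(1-2^{-2\rho}\right)^{2m},
\]
and then applies l'H\^opital's rule to show that $2^{\kappa\rho}\log\bigl(2^{2\rho}/(2^{2\rho}-1)\bigr)\to 0$ when $\kappa<2$, so that eventually $m=2^{\kappa\rho}$ satisfies the required inequality. Your two-level union bound sidesteps both the exact computation and the limit calculation: you never use independence (only that each conjunction has length $2\rho$), and the resulting bound $2^{1+(\kappa-2)\rho}$ makes the asymptotic behaviour transparent without any calculus. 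The paper's route gives a sharper quantitative expression, but for the purposes of the lemma your approach loses nothing and is more elementary. Both arguments rely on the same implicit assumption---that $\rho(n)\to\infty$, which follows from $\rho$ being a positive increasing integer-valued function as stipulated in the surrounding theorem.
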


\begin{proof}
We begin by bounding the probability that $c_1$ and $c_2$ agree on an
i.i.d. sample of size $m$.  We have
\begin{equation}
\label{eqn:zero-label-sample}
\Prob{S\sim D^m}{\forall x\in S  \cdot c_1(x)=c_2(x)=0}
=\left(1-\frac{1}{2^{2\rho}}\right)^{2m}
\enspace.
\end{equation}
In particular, if 
\begin{equation}
\label{eqn:sample-size}
m\leq \frac{\log(2)}{2\log(2^{2\rho}/(2^{2\rho}-1))}
\enspace,
\end{equation}
then the RHS of  Equation~\ref{eqn:zero-label-sample} is at least $1/2$.

Now, let us consider the following limit, where $\rho$ is a function of the input parameter $n$:
\begin{align*}
\underset{n\rightarrow\infty}{\lim}\;
2^{\kappa\rho}\log\left(\frac{2^{2\rho}}{2^{2\rho}-1}\right)
&=\frac{-\log(4)}{\kappa\log(2)}\;\underset{n\rightarrow\infty}{\lim}\;\frac{2^{\kappa\rho}}{1-2^{2\rho}}   \\
&=\frac{-\log(4)}{\kappa\log(2)}\;\frac{\kappa\log(2)}{-2\log(2)}\underset{n\rightarrow\infty}{\lim}\;\frac{2^{\kappa\rho}}{2^{2\rho}}   \\
&= \underset{n\rightarrow\infty}{\lim}\;2^{(\kappa-2)\rho}\\
&=
\begin{cases} 
0 & \text{if $\kappa<2$} \\
1 & \text{if $\kappa=2$} \\
\infty & \text{if $\kappa>2$}
\end{cases}
\enspace,
\end{align*}
where the first two equalities follow from l'H\^opital's rule.

Thus if $\kappa<2$ then $2^{\kappa\rho}$ is $o\left(\left(\log\left(\frac{2^{2\rho}}{2^{2\rho}-1}\right)\right)^{-1}\right)$.

\end{proof}

\begin{remark}
  Note that for a given $\kappa<2$, the lower bound $2^{\kappa\rho}$
  holds only for sufficiently large $\rho(n)$.  By looking at
  Equation~\ref{eqn:zero-label-sample}, and letting $m=2^\rho$,
  we get that $\rho(n)\geq2$ is a sufficient condition for it to hold.  
  If we want a lower bound for robust learning that
  is larger than that of standard learning (where the dependence is
  $\Theta(n)$) for a $\log(n)$ adversary, setting $m=2^{1.7\rho}$
  and requiring $\rho(n)\geq6$, for e.g., would be sufficient.
\end{remark}

\section{Conclusion}

We have shown that the class $k$-DL is efficiently robustly learnable
against a logarithmically-bounded adversary, thus making progress on
the open problem of \citet{gourdeau2021hardness} of whether
PAC-learnable classes are always robust in general against a
logarithmically-bounded adversary.  The main technical tool was an
isoperimetric result concerning CNF formulas.  Moreover, we have shown
that, for monotone conjunctions and any superclass thereof, any
$\rho$-robust learning algorithm must have a sample complexity that is
exponential in the adversarial budget $\rho$.

Deriving sample complexity bounds for the robust learnability of
halfspaces under the uniform distribution is perhaps the most natural
next step towards resolving the above-mentioned open problem.  Another
direction of further research concerns improving the sample complexity
bounds for $k$-DL in the present paper.  Here we have used a proper
PAC-learning algorithm as a black box in our robust learning procedure (see Corollary~\ref{cor:k-dl}).
By controlling the accuracy parameter of the standard PAC-learning
algorithm, we are able to get a robust learning algorithm.  From this,
we get polynomial sample complexity upper bounds for $k$-DL in terms
of the \emph{robustness} accuracy parameter $\varepsilon$, the
distribution parameter $\alpha$, and the input dimension $n$.  The
resulting polynomial has degree $O(8^k(1+\alpha)^{2k^2})$ in the term
$1/\varepsilon$ and degree $O(k8^k(1+\alpha)^{2k^2})$ in the dimension
$n$.  It is natural to ask whether these bounds can be improved in a
significant way, e.g., by adapting the learning procedure to directly
take robustness into account, rather than using a PAC-learning
algorithm as a black box.  Connected to this, we note that our lower bound
focuses on establishing the exponential dependence of the number of
samples on the robustness parameter.  The bound is derived from the
case of monotone conjunctions (a special case of 1-DL) under the
uniform distribution and so does not mention $k$, nor the
distribution parameter $\alpha$.  Likewise, it does not mention the
desired accuracy $\varepsilon$.  Deriving sample complexity lower
bounds with a dependence on these parameters, potentially through
other techniques, would help give a complete picture of the robust
learnability of $k$-DL.

\section*{Acknowledgments}
MK and PG received funding from the ERC under the European Union’s Horizon 2020 research and innovation programme (FUN2MODEL, grant agreement No.~834115).

\bibliographystyle{apalike}
\bibliography{references}

\appendix

\section{Preliminaries}
\subsection{The PAC framework}
\label{app:pac}

We study the problem of robust classification in  the realizable setting and where the input space is the Boolean cube $\X_n=\{0,1\}^n$. 
For clarity, we first recall the definition of the PAC learning framework \cite{valiant1984theory}.

\begin{definition}[PAC Learning]
Let $\C_n$ be a concept class over $\X_n$ and let $\C=\bigcup_{n\in\N}\C_n$.
We say that $\C$ is \emph{PAC learnable using hypothesis class $\H$} and sample complexity function $p(\cdot,\cdot,\cdot,\cdot)$ if there exists an algorithm $\A$ that satisfies the following:
for all $n\in\N$, for every $c\in\C_n$, for every $D$ over $\X_n$, for every $0<\epsilon<1/2$ and $0<\delta<1/2$, if whenever $\A$ is given access to $m\geq p(n,1/\epsilon,1/\delta,\text{size}(c))$ examples drawn i.i.d. from $D$ and labeled with $c$, $\A$ outputs a polynomially evaluatable $h\in\H$ such that with probability at least $1-\delta$, 
\begin{equation*}
\Prob{x\sim D}{c(x)\neq h(x)}\leq \epsilon\enspace.
\end{equation*}
We say that $\C$ is statistically efficiently PAC learnable if $p$ is polynomial in $n,1/\epsilon$, $1/\delta$ and size$(c)$, and computationally efficiently PAC learnable if $\A$ runs in polynomial time in $n,1/\epsilon$, $1/\delta$ and size$(c)$.
\end{definition}

PAC learning is \emph{distribution-free}, in the sense that no assumptions are made about the distribution from which the data comes from.
The setting where $\C=\H$ is called \emph{proper learning}, and \emph{improper learning} otherwise.

\subsection{Monotone Conjunctions and $k$-CNF Formulas}
\label{app:formulas}

A conjunction $c$ over $\{0,1\}^n$ can be represented a set of literals $l_1,\dots,l_k$, where, for $x\in\X_n$, $c(x)=\bigwedge_{i=1}^k l_i$. 
For example, $c(x)=x_1\wedge\bar{x_2}\wedge{x_5}$ is a conjunction.
Monotone conjunctions are the subclass of conjunctions where negations are not allowed, i.e., all literals are of the form $l_i=x_j$ for some $j\in[n]$.

A formula $\varphi$ in the conjunctive normal form (CNF) is a conjunction of clauses, where each clause is itself a disjunction of literals. 
A $k$-CNF formula is a CNF formula where each clause contains at most $k$ literals.
For example, $\varphi= (x_1 \vee x_2) \wedge (\bar{x_3} \vee x_4) \wedge \bar{x_5}$ is a 2-CNF.

\subsection{Log-Lipschitz Distributions}
\label{app:log-lipschitz}

Log-Lipschitz distributions have the following useful properties, which are stated in \cite{awasthi2013learning} and whose proofs can be found in \cite{gourdeau2019hardness}:
\begin{lemma}
\label{lemma:log-lips-facts}
Let $D$ be an $\alpha$-$\log$-Lipschitz distribution over $\boolhc$. 
Then the following hold:
\begin{enumerate}[i.]
\item\label{test} For $b\in\{0,1\}$, $\frac{1}{1+\alpha}\leq \Prob{x\sim D}{x_i=b}\leq\frac{\alpha}{1+\alpha}$.
\item For any $S\subseteq[n]$, the marginal distribution $D_{\bar{S}}$ is $\alpha$-$\log$-Lipschitz, where $D_{\bar{S}}(y)=\sum_{y'\in\{0,1\}^S} D(yy')$.
\item For any $S\subseteq[n]$ and for any property $\pi_S$ that only depends on variables $x_S$, the marginal with respect to $\bar{S}$ of the conditional distribution $(D|\pi_S)_{\bar{S}}$ is $\alpha$-$\log$-Lipschitz.
\item For any $S\subseteq[n]$ and $b_S\in\{0,1\}^S$,  we have that $\left(\frac{1}{1+\alpha}\right)^{|S|}\leq \Prob{x\sim D}{x_i=b}\leq\left(\frac{\alpha}{1+\alpha}\right)^{|S|}$.
\end{enumerate}
\end{lemma}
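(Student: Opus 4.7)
The plan is to derive the four claims sequentially, each reducing to the defining one-bit ratio bound $\alpha^{-1} \le D(x)/D(x') \le \alpha$ for Hamming-neighbours $x, x'$, together with the previously-established claims.

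For (i), I would fix a coordinate $i$ and write $\Pr_{x\sim D}[x_i=b]$ as a sum over assignments of the other $n-1$ coordinates. Pairing each summand with the one obtained by flipping bit $i$ yields a bijection between the terms of $\Pr[x_i=b]$ and those of $\Pr[x_i=1-b]$ in which each pair consists of hypercube neighbours. The one-bit ratio bound transfers termwise to $\Pr[x_i=b]\le \alpha\,\Pr[x_i=1-b]$, which combined with $\Pr[x_i=0]+\Pr[x_i=1]=1$ yields the two-sided bound in a single line.

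For (ii), take neighbours $y, y' \in \{0,1\}^{[n]\setminus S}$ differing in exactly one bit. Every summand $D(yz)$ in $D_{\bar S}(y)$ pairs with $D(y'z)$ in $D_{\bar S}(y')$, and the pair $(yz, y'z)$ itself still differs in a single bit, so the one-bit ratio bound transfers termwise from $D$ to the two sums. Claim (iii) follows by the same termwise argument applied to a restricted sum: the ratio $(D\mid\pi_S)_{\bar S}(y)/(D\mid\pi_S)_{\bar S}(y')$ is a ratio of two sums of $D$-values indexed by $\{z : \pi_S(z)\}$ (the normaliser $\Pr_D[\pi_S]$ cancels), and each matched pair of terms is once again a pair of hypercube neighbours since $z$ is common to both.

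For (iv), I would iterate (i) using (iii). Enumerating $S=\{i_1,\ldots,i_s\}$, the chain rule writes $\Pr[x_S=b_S]$ as a product of $s$ conditional probabilities of the form $\Pr[x_{i_j}=b_{i_j}\mid x_{i_1}=b_{i_1},\ldots,x_{i_{j-1}}=b_{i_{j-1}}]$. By (iii), each such factor is the single-coordinate marginal of an $\alpha$-log-Lipschitz conditional distribution on the remaining coordinates, so by (i) it lies in $[1/(1+\alpha),\,\alpha/(1+\alpha)]$. Taking the product of $s=|S|$ such factors gives the claimed exponential bounds. The only real obstacle in this whole chain of arguments is bookkeeping around (iii): one must check that restricting the summation range according to $\pi_S$ does not spoil the termwise ratio bound, which reduces to the trivial fact that $\pi_S$ treats $(yz, y'z)$ identically whenever $y$ and $y'$ agree on $S$.
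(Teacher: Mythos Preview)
Your argument is correct in all four parts. The termwise pairing for (i)--(iii) is exactly the right idea, and the chain-rule reduction for (iv) is clean; the only care point, as you note, is that at step $j$ you invoke (iii) with the conditioning set $S'=\{i_1,\ldots,i_{j-1}\}$ (not the full $S$), so that $i_j$ lies in the complement $\overline{S'}$ and (i) applies to the resulting $\alpha$-log-Lipschitz marginal.

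There is nothing to compare against here: the paper does not prove this lemma but simply cites \cite{awasthi2013learning} for the statement and \cite{gourdeau2019hardness} for the proofs. Your self-contained derivation is therefore strictly more than what the paper supplies.
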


\section{Proof of Corollary~\ref{cor:k-dl}}
\label{app:log-exp}

\begin{proof}[Proof of Corollary~\ref{cor:k-dl}]
  Let $\A$ be the (proper) PAC-learning algorithm for k-DL as
  in~\cite{rivest1987learning}, with sample complexity $\poly(\cdot)$.
  Fix the input dimension $n$, target concept $c$ and distribution
  $D\in \D_n$, and let $\rho=\log n$.  Fix the accuracy parameter
  $0<\varepsilon<1/2$ and confidence parameter $0<\delta<1/2$ and let
  $\eta=1/(1+\alpha)^k$. Set 
  $$\varepsilon_0=C_1\left(\frac{16\varepsilon}{e^4n^{2k+2}}\right)^{C_2}
  \min\set{\left(\frac{16\varepsilon}{e^4n^{2k+2}}\right)^{C_3},n^{-C_4}}\enspace,$$ 
  where the constants are the ones derived in Theorem~\ref{thm:k-cnf}. 
  
  Let $m=\lceil\poly(n,1/\delta,1/\varepsilon_0)\rceil$, and note that
  $m$ is polynomial in $n$, $1/\delta$ and $1/\varepsilon$.

  Let $S\sim D^m$ and $h=\A(S)$.  Let the target and hypothesis be 
  defined as the following decision lists: $c=((K_1,v_1),\ldots,(K_r,v_r))$ and
	$h=((K'_1,v'_1),\ldots,(K'_s,v'_s))$, where the clauses $K_i$ are 
	conjunctions of $k$ literals. Given $i\in\{1,\ldots,r\}$ and 
	$j \in \{1,\ldots,s\}$, define a $k$-CNF formula $\varphi^{(c,h)}_{i,j}$ 
	by writing
\[ \varphi^{(c,h)}_{i,j} = \neg K_1 \wedge \cdots \wedge \neg
  K_{i-1} \wedge K_i \wedge  \neg K'_1 \wedge \cdots \wedge \neg
  K'_{j-1}\wedge K'_j \, . \]
   Notice that the formula $\varphi^{(c,h)}_{i,j}$ represents the set of
   inputs $x\in \X$ that respectively activate vertex $i$ in $c$ and
   vertex $j$ in $h$.
   
Since $ \Prob{x\sim D}{h(x)\neq c(x)}<\varepsilon_0$ with probability at least
$1-\delta$, any $\varphi^{(c,h)}_{i,j}$ that leads to a misclassification must 
have $\satnot(\varphi^{(c,h)}_{i,j})<\varepsilon_0$.
But by Theorem~\ref{thm:k-cnf}, $\satlog(\varphi^{(c,h)}_{i,j})<\frac{16\varepsilon}{e^4n^{2k+2}}$ for all $\varphi^{(c,h)}_{i,j}$ with probability at least
$1-\delta$. 

Hence the probability that a $\rho$-bounded adversary can
make $\varphi^{(c,d)}_{i,j}$ true  is at most $\frac{16\varepsilon}{e^4n^{2k+2}}$.  
Taking a union bound over all possible choices of $i$ and $j$ (there are 
$\sum_{i=1}^k{n\choose k}\leq k\left(\frac{en}{k}\right)^k$ possible clauses
in $k$-decision lists, which gives us a crude estimate of 
$k^2\left(\frac{en}{k}\right)^{2k}\leq \frac{e^4n^{2k+2}}{16} $ choices of 
$i$ and $j$) we conclude that
$\roblog(h,c) < \varepsilon$.

\end{proof}

\section{Proof of Theorem~\ref{thm:mon-conj}}
\label{app:sc-lb}

The proof of Theorem~\ref{thm:mon-conj} relies on the following lemmas:
\begin{lemma}[Lemma 6 in \cite{gourdeau2021hardness}]
\label{lemma:robloss-triangle}
Let $c_1,c_2\in\{0,1\}^\X$ and fix a distribution on $\X$. 
Then for all $h:\boolhc\rightarrow\set{0,1}$
\begin{equation*}
\robloss(c_1,c_2)
\leq \robloss(h,c_1)+\robloss(h,c_2)
\enspace.
\end{equation*}
\end{lemma}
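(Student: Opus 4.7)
The plan is to unfold the definition of $\robloss$ and reduce the claim to a pointwise event inclusion followed by a union bound. Recall that
$$\robloss(f,g) = \Prob{x\sim D}{\exists z\in B_\rho(x) \st f(z)\neq g(z)}\, ,$$
so each of the three quantities in the lemma is the $D$-probability of an event on $x$. I would denote
$$E_{12}(x): \exists z\in B_\rho(x),\ c_1(z)\neq c_2(z), \quad E_{h,i}(x): \exists z\in B_\rho(x),\ h(z)\neq c_i(z)\text{ for }i=1,2\, .$$

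The entire content of the argument is the pointwise implication $E_{12}(x)\Rightarrow E_{h,1}(x)\cup E_{h,2}(x)$. To see this, suppose $E_{12}(x)$ holds and pick a witness $z\in B_\rho(x)$ with $c_1(z)\neq c_2(z)$. Since $h(z)\in\{0,1\}$ and the two values $c_1(z),c_2(z)$ are distinct elements of $\{0,1\}$, the value $h(z)$ must differ from at least one of them; that same $z$ therefore witnesses $E_{h,1}(x)$ or $E_{h,2}(x)$. This is the only substantive step, and it is immediate from the fact that $\{0,1\}$ has only two elements combined with the crucial feature of the exact-in-the-ball risk that disagreement is evaluated at the perturbed point $z$ (not at $x$), so the same $z$ can serve as a witness for all three events.

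From the pointwise inclusion, a union bound over the two events on the right yields
$$\Prob{x\sim D}{E_{12}(x)} \leq \Prob{x\sim D}{E_{h,1}(x)} + \Prob{x\sim D}{E_{h,2}(x)}\, ,$$
which is exactly $\robloss(c_1,c_2)\leq \robloss(h,c_1)+\robloss(h,c_2)$. There is no real technical obstacle here; the only thing to be careful about is that the argument relies on the exact-in-the-ball definition, where both $h$ and the target are evaluated at $z$, so a single witness $z$ simultaneously refutes agreement for one of the two pairs $(h,c_i)$. The corresponding statement for the constant-in-the-ball risk $\roblossc$ would not follow by this simple reasoning, since there $h(z)$ is compared to $c_i(x)$ rather than $c_i(z)$, and that subtlety is worth a brief remark but not a longer argument.
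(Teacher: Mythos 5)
Your proof is correct and is essentially the same argument as the one behind Lemma~6 of the cited work (the present paper imports the lemma without re-proving it): unfold the exact-in-the-ball definition, observe that any witness $z$ with $c_1(z)\neq c_2(z)$ must have $h(z)$ disagree with at least one of $c_1(z),c_2(z)$, and conclude by a union bound. Your remark that this reasoning relies on the exact-in-the-ball (rather than constant-in-the-ball) risk is also accurate.
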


We then recall the following lemma from \cite{gourdeau2021hardness}, whose proof here makes the dependence on the adversarial budget $\rho$ explicit.

\begin{lemma}
\label{lemma:bound-loss}
Under the uniform distribution, for any $n\in\N$, disjoint $c_1,c_2\in{\MonConj}$ of even length $3\leq l\leq n/2$  on $\boolhc$ and robustness parameter $\rho= l/2$, we have that $\robloss(c_1,c_2)$ is bounded below by a constant that can be made arbitrarily close to $\frac{1}{2}$ as $l$ (and thus $\rho$) increases. 
\end{lemma}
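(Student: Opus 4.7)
The plan is to lower-bound $\robloss(c_1,c_2)$ by the probability of a single favorable adversarial event: that from a uniformly random $x$ the adversary can reach some $z$ with $d_H(x,z)\le \rho$ on which $c_1$ is true while $c_2$ is false. Writing $c_i=\bigwedge_{j\in S_i}x_j$ for disjoint $S_1,S_2\subseteq[n]$ of size $l$, I would introduce the two counts $a=|\{j\in S_1:x_j=1\}|$ and $b=|\{j\in S_2:x_j=1\}|$. Because $S_1\cap S_2=\emptyset$ and the distribution is uniform, $a$ and $b$ are independent $\mathrm{Bin}(l,1/2)$ random variables, and the entire problem reduces to a question about this joint distribution.

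Next I would translate the adversary's task into a condition on $(a,b)$. Forcing $c_1(z)=1$ costs exactly $l-a$ bit flips (the zero-valued bits of $x_{S_1}$); forcing $c_2(z)=0$ costs $0$ flips when $b<l$ and $1$ flip when $b=l$. Hence within budget $\rho=l/2$ the adversary can achieve $c_1(z)=1\wedge c_2(z)=0$ exactly when $l-a+\mathbf{1}[b=l]\le l/2$. Dropping to the simpler sub-event $\{a\ge l/2\}\cap\{b<l\}$, which lies inside the success event, independence gives
\[ \robloss(c_1,c_2)\;\ge\;\Pr[a\ge l/2,\,b<l]\;=\;\Pr[a\ge l/2]\cdot\bigl(1-2^{-l}\bigr). \]

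Finally I would bound the binomial tail by symmetry. Since $l$ is even, the median of $\mathrm{Bin}(l,1/2)$ is $l/2$, so $\Pr[a\ge l/2]=\tfrac12+\binom{l}{l/2}/2^{l+1}\ge \tfrac12$. Combining, $\robloss(c_1,c_2)\ge \tfrac12\bigl(1-2^{-l}\bigr)$, which tends to $1/2$ as $l\to\infty$ and, via $\rho=l/2$, makes the dependence on the adversarial budget explicit (as the statement demands). There is no serious obstacle: the one subtlety is the edge case $b=l$, where an extra flip is needed to falsify $c_2$, but this contributes only the factor $(1-2^{-l})$. A more refined accounting could retain the $b=l$ term and recover a bound exceeding $1/2$, but the simpler route above already suffices for the asymptotic claim.
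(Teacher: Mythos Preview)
Your proof is correct and essentially matches the paper's argument: both lower-bound $\robloss(c_1,c_2)$ by the probability of an event of the form ``one conjunction is false on $x$, and the other has at least $\rho$ ones among its variables,'' then use independence (from disjointness of the variable sets) and the symmetry of $\mathrm{Bin}(l,1/2)$ to obtain the bound $\tfrac{1}{2}(1-2^{-l})=\tfrac{1}{2}(1-2^{-2\rho})$. The only cosmetic difference is that the paper has the adversary flip zeros in $I_{c_2}$ to force $c_2(z)=1$ while keeping $c_1(z)=0$, whereas you swap the roles and force $c_1(z)=1$ while keeping $c_2(z)=0$; by symmetry this yields the identical bound, and your treatment of the edge case $b=l$ and the explicit median computation are just extra detail.
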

\begin{proof}
For a  hypothesis $c\in{\MonConj}$, let $I_c$ be the set of variables in $c$.
Let $c_1,c_2\in\C$ be as in the theorem statement.
Then the robust risk $\robloss(c_1,c_2)$ is bounded below by 
\begin{equation*}
\Prob{x\sim D}{c_1(x)=0\wedge x\text{ has at least $ \rho $ 1's in }I_{c_2}}\geq (1-2^{-2\rho})/2\enspace.
\end{equation*}
\end{proof}

\begin{proof}[Proof of Theorem~\ref{thm:mon-conj}]
Fix any algorithm $\A$ for learning {\MonConj}.
We will show that the expected robust risk between a randomly chosen target function and any hypothesis returned by $\A$ is bounded below by a constant.
Let $\delta=1/2$, and fix a positive increasing adversarial-budget function $\rho(n)\leq n/4$ ($n$ is not yet fixed).
Let $m(n)=2^{\kappa\rho(n)}$ for an arbitrary $\kappa<0$.
Let $n_0$ be as in Lemma~\ref{lemma:concepts-agree}, where $m(n)$ is the fixed sample complexity function. 
Then Equation~(\ref{eqn:sample-size}) in the proof of Lemma~\ref{lemma:concepts-agree} holds for all $n\geq n_0$.

Now, let $D$ be the uniform distribution on $\boolhc$ for $n\geq \max(n_0,3)$, and choose $c_1$, $c_2$ as in Lemma~\ref{lemma:bound-loss}.
Note that $\robloss(c_1,c_2)>\frac{5}{12}$ by the choice of $n$.
Pick the target function $c$ uniformly at random between $c_1$ and $c_2$, and label $S\sim D^{m(n)}$ with $c$.
By Lemma~\ref{lemma:concepts-agree}, $c_1$ and $c_2$ agree with the labeling of $S$ (which implies that all the points have label $0$) with probability at least~$\frac{1}{2}$ over the choice of $S$. 

Define the following three events for $S\sim D^m$: 
\begin{align*}
&\E:\;{c_1}_{|S}={c_2}_{|S}\;,\enspace
\E_{c_1}:\;c=c_1\;,\enspace
\E_{c_2}:\;c=c_2\enspace.
\end{align*}

Then, by Lemmas~\ref{lemma:concepts-agree} and~\ref{lemma:robloss-triangle}, 
\begin{align*}
\eval{c,S}{\robloss(\A(S),c)}
&\geq\Prob{c,S}{\E}\eval{c,S}{\robloss(\A(S),c)\;|\;\E}\\
&>\frac{1}{2}(
\Prob{c,S}{\E_{c_1}}\eval{S}{\robloss(\A(S),c)\;|\;\E\cap\E_{c_1}}+\Prob{c,S}{\E_{c_2}}\eval {S}{\robloss(\A(S),c)\;|\;\E\cap \E_{c_2}}
)\\
&=\frac{1}{4}\;\eval{S}{\robloss(\A(S),c_1)+\robloss(\A(S),c_2)\;|\;\E}\\
&\geq \frac{1}{4}\;\eval{S}{\robloss(c_2,c_1)}\\
&=\frac{5}{48}
\enspace.
\end{align*}

\end{proof}

\end{document}